\newcommand{\fw}{CECILIA}
\newcommand{\party}{P} 
\newcommand{\dec}{f} 
\newcommand{\numbit}{n} 
\newcommand{\ringA}{L} 
\newcolumntype{A}{>{\centering\columncolor{gray!5!white}\arraybackslash}X}
\newcolumntype{B}{>{\centering\columncolor{gray!10!white}\arraybackslash}X}
\newcolumntype{C}{>{\centering\columncolor{gray!20!white}\arraybackslash}p{1.7cm}}
\newcolumntype{D}{>{\centering\columncolor{gray!30!white}\arraybackslash}p{1.7cm}}
\newcolumntype{E}{>{\centering\columncolor{gray!20!white}\arraybackslash}X}
\newcommand{\lf}{\langle} 
\newcommand{\rg}[1]{\rangle_{#1}} 
\newcommand{\pl}{L} 
\newcommand{\ring}{K} 
\newcommand{\nsamples}{M} 
\renewcommand{\fw}{ppAURORA}
\newcommand{\cecilia}{CECILIA}
\newcommand{\Sim}{\mathcal{S}}
\newcommand{\Adv}{\mathcal{A}}
\newcolumntype{R}{>{\centering\columncolor{gray!5!white}\arraybackslash}p{0.5cm}}
\newcolumntype{V}{>{\centering\columncolor{gray!5!white}\arraybackslash}p{1.07cm}}
\newcolumntype{G}{>{\centering\columncolor{gray!10!white}\arraybackslash}m{0.37cm}}
\newcolumntype{M}{>{\centering\columncolor{gray!20!white}\arraybackslash}m{2.45cm}}
\newcolumntype{F}{>{\centering\columncolor{gray!30!white}\arraybackslash}X}
\newcolumntype{S}[2]{>{\centering\columncolor{gray!#1!white}\arraybackslash}m{#2cm}}
\begin{document}
\title{ppAURORA: Privacy Preserving Area Under Receiver Operating Characteristic and Precision-Recall Curves}
\titlerunning{ppAURORA}

\author{Ali Burak Ünal\inst{1,3}\orcidID{0000-0002-7279-620X} \and
Nico Pfeifer \inst{2,3}\orcidID{0000-0002-4647-8566} \and
Mete Akgün \inst{1,3}\orcidID{0000-0003-4088-2784}} 

\authorrunning{}
%
\institute{Medical Data Privacy Preserving Machine Learning (MDPPML), University of Tübingen, Germany \and
Methods in Medical Informatics, University of Tübingen, Germany \and
Institute for Bioinformatics and Medical Informatics (IBMI), University of Tübingen, Germany\\
\email{\{ali-burak.uenal,nico.pfeifer,mete.akguen\}@uni-tuebingen.de}}
\maketitle              
\begin{abstract}
Computing an AUC as a performance measure to compare the quality of different machine learning models is one of the final steps of many research projects. Many of these methods are trained on privacy-sensitive data and there are several different approaches like $\epsilon$-differential privacy, federated machine learning and cryptography if the datasets cannot be shared or used jointly at one place for training and/or testing. In this setting, it can also be a problem to compute the global AUC, since the labels might also contain privacy-sensitive information. There have been approaches based on $\epsilon$-differential privacy to address this problem, but to the best of our knowledge, no exact privacy preserving solution has been introduced. In this paper, we propose an MPC-based solution, called \fw{}, with private merging of individually sorted lists from multiple sources to compute the exact AUC as one could obtain on the pooled original test samples. With \fw{}, the computation of the exact area under precision-recall and receiver operating characteristic curves is possible even when ties between prediction confidence values exist. We use \fw{} to evaluate two different models predicting acute myeloid leukemia therapy response and heart disease, respectively. We also assess its scalability via synthetic data experiments. All these experiments show that we efficiently and privately compute the exact same AUC with both evaluation metrics as one can obtain on the pooled test samples in plaintext according to the semi-honest adversary setting.

\keywords{Privacy preserving area under the curve  \and ROC curve \and PR curve \and MPC.}
\end{abstract}
\section{Introduction}
Recently, privacy preserving machine learning studies aimed at protecting sensitive information during training and/or testing of a model in scenarios where data is distributed between different sources and cannot be shared in plaintext \citep{mohassel2017secureml,wagh2018securenn,juvekar2018gazelle,mohassel2018aby3,unal2022cecilia,damgaard2019new,byali2020flash,PatraS20}. However, privacy protection in the computation of the area under curve (AUC), which is one of the most preferred methods to compare different machine learning models with binary outcome, has not been addressed sufficiently. There are several differential privacy based approaches in the literature for computing the receiver operating characteristic (ROC) curve \citep{chaudhuri2013stability,boyd2015differential,chen2016differentially}. Briefly, they aim to protect the privacy of the data by introducing noise into the computation so that one cannot obtain the original data employed in the computation. However, due to the nature of differential privacy, the resulting AUC is different from the one which could be obtained using non-perturbed prediction confidence values (PCVs) when noise is added to the PCVs \citep{sun2022differentially}. For the precision-recall (PR) curve, there even exists no such studies in the literature. As a general statement, private computation of the exact AUC has never been addressed before to the best of our knowledge. 

In this paper, we propose a 3-party computation based \textbf{p}rivacy \textbf{p}reserving \textbf{a}rea \textbf{u}nder \textbf{r}eceiver \textbf{o}perating characteristic and precision-\textbf{r}ec\textbf{a}ll curves (\fw{}). For this purpose, we utilize \cecilia{} \citep{unal2022cecilia}, which has efficient functionalities to realize several operations in a privacy preserving way. The most important missing operation of it is the division operation. In order to address the necessity of an efficient, private and secure computation of the exact AUC, we adapt the division operation of SecureNN \citep{wagh2018securenn}. Since the building blocks of \cecilia{} require less communication rounds than SecureNN, we implemented the division operation of SecureNN using the building blocks of \cecilia{}. Using \fw{}, we compute the area under the PR curve (AUPR) and ROC curve (AUROC). We address two different cases of ROC curve in \fw{} by two different versions of AUROC computation. The first one is designed for the computation of the exact AUC using PCVs with no tie. In case of a tie of PCVs of samples from different classes, this version just approximates the metric based on the order of the samples, having a problem when values of both axes of ROC curve plot change at the same time. In order to compute the exact AUC even in case of a tie, we introduce the second version of AUROC with a slightly higher communication cost than the first approach. Along with the privacy of the resulting AUC, since the labels are also kept secret during the whole computation, both versions are capable of protecting the information of the number of samples belonging to the classes from all participants of the computation. Otherwise, such information could have been used to obtain the order of the labels of the PCVs \citep{whitehill2019does}. Furthermore, since we do not provide the data sources with the ROC curve, they cannot regenerate the underlying true data. Therefore, both versions are secure against such attacks \citep{matthews2013examination}. We utilized the with-tie version of AUROC computation to compute the AUPR since the values of both axes can change at the same time even if there is no tie. To the best of our knowledge, \fw{} is the first study addressing the privacy preserving computation of AUPR.

\section{Motivation}
\fw{} can enable the privacy preserving evaluation of a model in a collaborative way. Especially when there are parties with insufficient test samples, even if these parties obtain the collaboratively trained model, they cannot reliably evaluate the predictions of this model on their test samples. The result of AUC on such a small set of test samples could vary significantly, which makes the reliability of the model evaluation questionable. Figure \ref{fig:auc_stabilization} demonstrates the stability of AUROC on varying numbers of samples.

\begin{figure}[h!]
    \centering
    \includegraphics[width=0.6\linewidth]{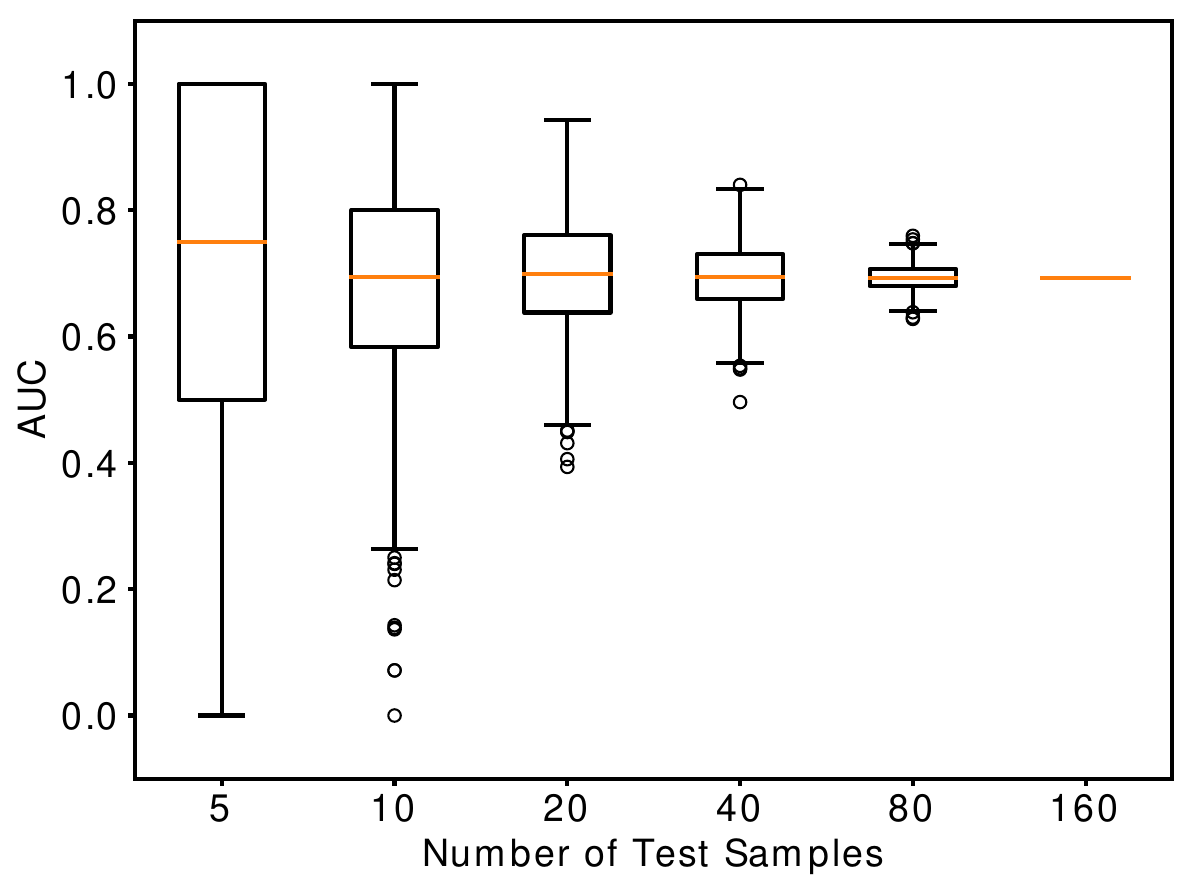}
    \caption{AUROC for varying number of test samples randomly chosen from the whole dataset}
    \label{fig:auc_stabilization}
\end{figure}

To elaborate more on how \fw{} can contribute to the community, let us imagine a scenario where a model is trained collaboratively using an MPC-based framework \citep{mohassel2017secureml,wagh2018securenn,damgaard2019new} or federated machine learning framework \citep{li2020deepfed} or any other method allowing such privacy preserving training. Once the model is obtained, the participating parties can perform predictions on the model using their test samples to evaluate it. However, as it is shown in Figure \ref{fig:auc_stabilization}, the parties with fewer data cannot reliably determine the performance of the model. Instead of individual evaluation of the model, which could lead to incorrect assessment of the model's performance, they can utilize \fw{} to evaluate it collaboratively and obtain the result of this evaluation as if it was performed on the pooled test samples of the parties without sacrificing the privacy of neither the labels nor the predictions of the samples.

\section{Preliminaries}
\textbf{Security Model:} In this study, we aim to protect the privacy of the PCVs and the labels of the samples from parties, the ranking of these samples in the globally sorted list and the resulting AUC. We prove the full security of our solution (i.e., privacy and correctness) in the presence of semi-honest adversaries that follow the protocol specification, but try to learn information from the execution of the protocol. We consider a scenario where a semi-honest adversary corrupts a single server and an arbitrary number of data owners in the simulation paradigm \citep{Lindell17,Canetti01} where two worlds are defined: the real world where parties run the protocol without any trusted party, and the ideal world where parties make the computation through a trusted party. Security is modeled as the view of an adversary called a simulator $\Sim$ in the ideal world, who cannot be distinguished from the view of an adversary $\Adv$ in the real world. The universal composability framework \citep{Canetti01} introduces an adversarial entity called environment $\mathcal{Z}$, which gives inputs to all parties and reads outputs from them. The environment is used in modeling the security of end-to-end protocols where several secure protocols are used arbitrarily. Security here is modeled as \textit{no environment can distinguish if it interacts with the real world and the adversary $\Adv$ or the ideal world and the simulator $\Sim$}. We also provide privacy in the presence of a malicious adversary corrupting any single server, which is formalized by \citet{araki2016high}. The privacy is formalized by saying that a malicious party, which arbitrarily deviates from the protocol description, cannot learn anything about the inputs and outputs of the honest parties. 

\textbf{Notations:} In our secure protocols, we use additive secret sharing over the ring $\mathbb{Z}_{\ring}$ where $\ring = 2^{64}$ to benefit from the natural modulo of CPUs of most modern computers. We denote two shares of $x$ over $\mathbb{Z}_{\ring}$ with ($\langle x \rangle_0$, $\langle x\rangle_1$).


\subsection{Secure Multi-party Computation}
Secure multi-party computation was proposed in the 1980s \citep{Yao:1986:GES:1382439.1382944,Goldreich:1987:PAM:28395.28420}. These studies showed that multiple parties can compute any function on inputs without learning anything about the inputs of the other parties. Let us assume that there are $n$ parties $\party_1,\ldots,\party_n$ and $\party_j$ has a private input $x_j$ for $j \in \{1,\ldots,n\}$. All parties want to compute the arbitrary function $(y_1,\ldots,y_n) = f(x_1,\ldots,x_n)$ and get the corresponding result $y_j$. MPC allows the parties to compute the function through an interactive protocol and $\party_j$ to learn only $y_j$.

In \fw{}, we employ \cecilia{}, which has three computing parties, $\party_0, \party_1$ and $\party_2$, and uses 2-out-of-2 additive secret sharing where an $\ell$-bit value $x$ is shared additively in a ring among $\party_0$ and $\party_1$ as the sum of two values. For $\ell$-bit secret sharing of $x$, we have $\langle x \rangle_{0}  +\langle x \rangle_{1} \equiv x \mod L$ where $\party_i$ knows only $\langle x \rangle_{i} $ and $i \in \{0,1\}$. All arithmetic operations are performed in the ring $\mathbb{Z}_{\ringA}$. 


\subsection{Area Under Curve}
One of the most common ways to summarize the plot-based model evaluation metrics is area under curve (AUC). It calculates the area under the curve of a plot-based model. It is applicable to various different evaluation metrics. Among those, we mention ROC curve and the PR curve, which are the focus of this study.

\subsubsection{Area Under ROC Curve (AUROC)}
In machine learning problems with a binary outcome, the ROC curve is very effective to take the sensitivity and the specificity of the classifier into account by plotting the false positive rate (FPR) on the x-axis and the true positive rate (TPR) on the y-axis. AUC summarizes this plot by measuring the area between the line and the x-axis, which is the area under the ROC curve (AUROC). Let us assume that $\nsamples$ is the number of test samples, $V \in [0,1]^\nsamples$ contains the sorted PCVs of test samples in descending order, $T \in [0,1]^\nsamples$ and $F \in [0,1]^\nsamples$ contain the corresponding TPR and FPR values, respectively, where the threshold for entry $i$ is set to $V[i]$, and $T[0] = F[0] = 0$. In case there is no tie in $V$, the privacy-friendly AUROC computation is as follows:
\begin{equation} \label{eq:auc_wo_tie}
\small
    AUROC = \sum_{i = 1}^{\nsamples} \Big( T[i] \cdot (F[i] - F[i-1]) \Big)
\end{equation}

This formula just approximates the exact AUROC in case of a tie in $V$ depending on the order of the samples. As an extreme example, let $V$ have $10$ samples with the same PCV. Let the first $5$ samples have label $1$ and the second $5$ samples have label $0$. Such a setting outputs $AUROC=1$ with the first version of AUROC. On the contrary, if we have samples with $0$ at the beginning and samples with $1$ later, we obtain $AUROC=0$ with the no-tie version of AUROC. In order to define an accurate formula for the AUROC in case of such a tie condition, let $\xi$ be the vector of indices in ascending order where the PCV of the sample at that index and the preceding one are different for $0 < |\xi| \leq \nsamples$ where $|\xi|$ denotes the size of the vector. Assuming that $\xi[0] = 0$, the computation of AUROC in case of a tie can be done as follows:
\begin{equation} \label{eq:auc_with_tie}
\small
    \begin{aligned}
        AUROC = \sum_{i = 1}^{|\xi|} \Big( & T[\xi[i-1]] \cdot (F[\xi[i]] - F[\xi[i-1]]) + \\ & \dfrac{(T[\xi[i]] - T[\xi[i-1]]) \cdot (F[\xi[i]] - F[\xi[i-1]])}{2} \Big)
    \end{aligned}
\end{equation}
As Equation \ref{eq:auc_with_tie} indicates, one only needs TPR and FPR values on the points where the PCV changes to obtain the exact AUROC. We will benefit from this observation in the privacy preserving AUROC computation when there is a tie condition in the PCVs.

\subsubsection{Area Under PR Curve (AUPR)}
The PR curve evaluates the models with a binary outcome by plotting recall on the x-axis and precision on the y-axis, and it is generally preferred over AUROC for scenarios with class imbalances. The AUC summarizes this plot by measuring the area under the PR curve (AUPR). Since both precision and recall can change at the same time even without a tie, we measure the area by using the Equation \ref{eq:auc_with_tie} where $T$ becomes the precision and $F$ becomes the recall.

\section{\fw{}}

In this section, we give the description of our protocol for \fw{}. In \fw{}, we have data owners that outsource their PCVs and the ground truth labels in secret shared form and three non-colluding servers that perform 3-party computation on secret shared PCVs to compute the AUC. The data sources start the protocol by outsourcing the labels and the predictions of their test samples to the servers. Afterward, the servers perform the desired calculation privately. Finally, they send the shares of the result back to the data sources. The communication between all parties is performed over a secure channel (e.g., TLS).

\textbf{Outsourcing:} At the start of \fw{}, each data owner $H_i$ has a list of PCVs and corresponding ground truth labels for $i \in \{1,\ldots,n\}$. Then, each data owner $H_i$ sorts its whole list $\pl_i$ according to PCVs in descending order, divides it into two additive shares $\pl_{i_0}$ and $\pl_{i_1}$, and sends them to $\party_0$ and $\party_1$, respectively. We refer to $\party_0$ and $\party_1$ as \textit{proxies}.

\textbf{Sorting:} After the outsourcing phase, $\party_0$ and $\party_1$ obtain the shares of individually sorted lists of PCVs of the data owners. Afterwards, the proxies need to perform a merging operation on each pair of individually sorted lists and continue with the merged lists until they obtain the global sorted list of PCVs. This can be considered as the leaves of a binary tree merging into the root node, which is, in our case, the global sorted list. Due to the high complexity of privacy preserving sorting, we decided to make the sorting parametric to adjust the trade-off between privacy and practicality. Let $\delta = 2a + 1$ be this parameter that determines the number of PCVs that will be added to the global sorted list in each iteration for $a \in \mathbb{N}$, and let $\pl_{i_k}$ and $\pl_{j_k}$ be the shares of two individually sorted lists of PCVs in $\party_k$s for $k \in \{0,1\}$ and $|\pl_{i}| \geq |\pl_{j}|$ where $|.|$ is the size operator. At the beginning, the proxies privately compare the lists elementwise. They utilize the results of the comparison in $\mathsf{MUX}$s to privately exchange the shares of PCVs in each pair, if the PCV in $\pl_j$ is larger than the PCV in $\pl_i$. In the first $\mathsf{MUX}$, they input the share in $\pl_{i_k}$ to $\mathsf{MUX}$ first and then the share in $\pl_{j_k}$ along with the share of the result of the comparison to select the larger of the PCVs. They move the results of the $\mathsf{MUX}$ to $\pl_{i_k}$. In the second $\mathsf{MUX}$, they reverse the order to select the smaller of the PCVs and move it to $\pl_{j_k}$. We call this stage \textit{shuffling}. Then, they move the top PCV of $\pl_{i_k}$ to the merged list of PCVs. If $\delta \neq 1$, then they continue comparing the top PCVs in the lists and moving the largest of them to the merged list. Once they move $\delta$ PCVs to the merged list, they shuffle the lists again, and if $|\pl_{j_k}| > |\pl_{i_k}|$, then they switch the list $\pl_{j_k}$ and $\pl_{i_k}$. Until finishing up the PCVs in $\pl_{i_k}$, the proxies follow shuffling-moving cycle.

The purpose of the shuffling is to increase the number of candidates for a specific position and, naturally, lower the chance of matching a PCV in the individually sorted lists to a PCV in the merged list. The highest possible chance of matching is $50\%$. This results in a very low chance of guessing the matching of whole PCVs in the list. Regarding the effect of $\delta$ on the privacy, it is important to note that $\delta$ needs to be an odd number to make sure that shuffling always leads to an increment in the number of candidates. An even value of $\delta$ may cause ineffective shuffling during the sorting. Furthermore, $\delta = 1$ provides the utmost privacy, which means that the chance of guessing the matching of the whole PCVs is 1 over the number of all possible merging of those two individually sorted lists. However, the execution time of sorting with $\delta = 1$ can be relatively high. For $\delta \neq 1$, the execution time can be low but the number of possible matching of PCVs in the individually sorted list to the merged list decreases in parallel to the increment of $\delta$. As a guideline on the choice of $\delta$, one can decide it based on how much privacy loss any matching could cause on the specific task. In case of $\delta \neq 1$ and $|\pl_{j_k}| = 1$ at some point in the sorting, the sorting continues as if it had just started with $\delta = 1$ to make sure that the worst case scenario for guessing the matching can be secured. More details of the sorting phase are in the Appendix.

\textbf{Division ($\mathsf{DIV}$):} Considering the necessity of a division operation for the exact AUC and that the utilized framework \cecilia{} does not have a division operation, we adapted the division operation from SecureNN \citep{wagh2018securenn}. However, instead of the building blocks of SecureNN, we use the building blocks of \cecilia{} to implement the division operation, since they have less communication round complexities. The basic idea is to employ long division in order to find the quotient. Even though the division operation of SecureNN is rather a normalization operation, which requires the denominator to be larger than the nominator, it is still useful for the exact AUC computation. In AUROC, TP of TPR and FP of FPR, which is also recall of AUPR, are always smaller than or equal to the number of positive and negative labels, respectively. Similarly, the precision of AUPR also satisfies this requirement. Therefore, we use this division operation in \fw{} to compute the exact AUC.

\subsection{Secure Computation of AUROC} 
Once $\party_0$ and $\party_1$ obtain the global sorted list of PCVs, they calculate the AUROC based on this list by employing one of the versions of AUROC depending on whether there exists a tie in the list.

\subsubsection{Secure AUROC Computation without Ties}
\label{sec:aucwotie}
In Algorithm \ref{alg:auc1}, we compute the AUROC as shown in Equation \ref{eq:auc_wo_tie} by assuming that there is no tie in the sorted list of PCVs. At the end of the secure computation, the shares of numerator $N$ and denominator $D$ are computed. Since $N$ is always greater than or equal to $D$, we can utilize the division of SecureNN to obtain $AUROC=N/D$. With the help of high numeric value precision of the results, most of the machine learning algorithms yield different PCVs for samples. Therefore, this version of computing the AUROC is applicable to most machine learning tasks. However, in case of a tie between samples from two classes in the PCVs, it does not guarantee the exact AUROC. Depending on the order of the samples, it approximates the score. To have a more accurate AUROC, we propose another version of AUROC computation with a slightly higher communication cost in the next section.

\IncMargin{1.5em}
\begin{algorithm}[!ht]
\footnotesize
\DontPrintSemicolon
\SetKwInOut{Input}{input}\SetKwInOut{Output}{output}
\Input{$\langle \pl \rangle_i = (\{\langle con_1\rangle_i,\langle label_1\rangle_i\},...,$ $\{\langle con_\nsamples \rangle_i, \langle label_\nsamples \rangle_i\} )$, $\langle \pl \rangle_i$ is a share of the global sorted list of PCVs, and labels}
For each $i \in \{0, 1\}$, $\party_i$ executes Steps $2$-$11$\;
$\langle TP\rangle_i \gets 0$, $\langle P\rangle_i \gets 0$, $\langle pFP\rangle_i \gets 0$, $\langle N\rangle_i  \gets 0$\;
\ForEach{item $\langle t\rangle \in \langle \pl \rangle$}{%
      $\langle TP\rangle_i \gets \langle TP\rangle_i + \langle t.label\rangle_i$\;
      $\langle P\rangle_i \gets \langle P\rangle_i + i$\;
      $\langle FP\rangle_i \gets \langle P\rangle_i - \langle TP\rangle_i$\;
      $\langle A\rangle_i \gets \mathsf{MUL}(\langle TP\rangle_i,\langle FP\rangle_i-\langle pFP\rangle_i)$\;
      $\langle N\rangle_i \gets \langle N\rangle_i+\langle A\rangle_i$\;
      $\langle pFP\rangle_i \gets \langle FP\rangle_i$\;
}
$\langle D\rangle_i \gets \mathsf{MUL}(\langle TP\rangle_i,\langle FP\rangle_i)$\;
$\langle ROC\rangle_i \gets \mathsf{DIV}(\langle N\rangle_i,\langle D\rangle_i)$\;
\captionsetup{width=\linewidth}
\caption{Secure AUROC computation without ties}
\label{alg:auc1}
\end{algorithm}
\DecMargin{1.5em}

\IncMargin{1.5em}
\begin{algorithm}[!ht]
\footnotesize
\DontPrintSemicolon
\SetKwInOut{Input}{input}\SetKwInOut{Output}{output}
\Input{$\langle \pl \rangle_i = (\{\langle con_1\rangle_i,\langle label_1\rangle_i\},$ $,...,$ $\{\langle con_\nsamples \rangle_i,\langle label_\nsamples \rangle_i\} )$, $\langle \pl \rangle_i$ is a share of the global sorted list of PCVs, and labels}
For each $i \in \{0, 1\}$, $\party_i$ executes Steps $2$-$14$\;

$\langle TP\rangle_i \gets 0$, $\langle P\rangle_i \gets 0$, $\langle pFP\rangle_i \gets 0$, $\langle pTP\rangle_i \gets 0$, $\langle N_1\rangle_i \gets 0$, $\langle N_2\rangle_i \gets 0$\;

\ForEach{item $\langle t\rangle_i \in \langle \pl \rangle_i$}{%
      $\langle TP\rangle_i \gets \langle TP\rangle_i + \langle t.label\rangle_i$\;
      
      $\langle P\rangle_i \gets \langle P\rangle_i + i$\;
      
      $\langle FP\rangle_i \gets \langle P\rangle_i - \langle TP\rangle_i$\;
      
      $\langle A \rangle_i \gets \mathsf{MUL}([\langle pTP\rangle_i, \langle TP\rangle_i - \langle pTP\rangle_i], [\langle FP\rangle_i-\langle pFP\rangle_i, \langle FP\rangle_i-\langle pFP\rangle_i])$\;
      
      $\langle A\rangle_i \gets \mathsf{MUL}(\langle A\rangle_i, [\langle t.con\rangle_i, \langle t.con\rangle_i])$\;
      
      $\langle N_1\rangle_i \gets \langle N_1\rangle_i+\langle A[0] \rangle_i$\;
      
      $\langle N_2\rangle_i \gets \langle N_2\rangle_i+\langle A[1] \rangle_i$\;
      
      [$\langle pre\_FP \rangle_i, \langle pre\_TP\rangle_i] \gets \mathsf{MUX}([\langle pFP\rangle_i, \langle pTP\rangle_i], [\langle FP\rangle_i, \langle TP\rangle_i], \newline [\langle t.con\rangle_i, \langle t.con\rangle_i])$\;
}
$\langle N\rangle_i \gets 2\cdot\langle N_1\rangle_i+\langle N_2\rangle_i$\;
$\langle D\rangle_i \gets 2\cdot\mathsf{MUL}(\langle TP\rangle_i,\langle FP\rangle_i)$\;
$\langle ROC\rangle_i \gets \mathsf{DIV}(\langle N\rangle_i,\langle D\rangle_i)$\;
\caption{Secure AUROC computation with tie}
\label{alg:auc2}
\end{algorithm}
\DecMargin{1.5em}

\IncMargin{1.5em}
\begin{algorithm}[!t]
\footnotesize
\DontPrintSemicolon
\SetKwInOut{Input}{input}\SetKwInOut{Output}{output}
\Input{$\langle C\rangle_i = (\langle con_1\rangle_i,...,$ $\langle con_\nsamples \rangle_i)$, $\langle C\rangle_i$ is a share of the global sorted list of PCVs, $\nsamples$ is the number of PCVs}
$\party_0$ and $\party_1$ hold a common random permutation $\pi$ for $\nsamples$ items\;
$\party_0$ and $\party_1$ hold a list of common random values $R$\;
$\party_0$ and $\party_1$ hold a list of common random permutation $\sigma$ for $\ell$ items\;
For each $i \in \{0, 1\}$, $\party_i$ executes Steps $5$-$13$\;
\For{$j\leftarrow 1$ \KwTo $\nsamples - 1$}{
      $\langle C[j]\rangle_{i} \gets (\langle C[j]\rangle_{i} - \langle C[j+1]\rangle_{i})$\;
      \uIf{$i = 0$}{
        $\langle C[j]\rangle_{i} = \ring - \langle C[j]\rangle_{i}$\;
      }
      $\langle C[j]\rangle_{i} = \langle C[j]\rangle_{i} \oplus R[j]$\;
      $\langle C[j]\rangle_{i} = \sigma_j(\langle C[j]\rangle_{i})$\;
}
$\langle D\rangle_{i}=\pi(\langle C\rangle_{i})$\;
Insert arbitrary number of dummy zero and non-zero values to randomly chosen locations in $\langle D\rangle_{i}$\;
$\party_i$ sends $\langle D\rangle_{i}$ to $P_{2}$\;
$\party_2$ reconstructs $D$ by computing $\langle D\rangle_{0}\oplus\langle D\rangle_{1}$\;
\ForEach{item $\langle d\rangle \in \langle D\rangle$}{%
    \If{$d>0$}{
        $d \gets 1$\;
    }
}
$\party_2$ creates new shares of $D$, denoted by $\langle D\rangle_0$ and $\langle D\rangle_1$, and sends them to $\party_0$ and $\party_1$, respectively.\;
For each $i \in \{0, 1\}$, $\party_i$ executes Steps $18$-$21$\;
Remove dummy zero and non-zero values from $\langle D\rangle_i$\;
$\langle C\rangle_i=\pi^{-1}(\langle D\rangle_i)$\;
\For{$j\leftarrow 1$ \KwTo $\nsamples - 1$}{
    $\langle \pl[j].con\rangle_i \gets \langle C[j]\rangle_i$\;
}
$\langle \pl[\nsamples].con\rangle_i \gets i$\;
\caption{Secure detection of ties}
\label{alg:ties}
\end{algorithm}
\DecMargin{1.5em}

\subsubsection{Secure AUROC Computation with Ties}
\label{sec:auctie}
To detect ties in the list of PCVs, $\party_0$ and $\party_1$ compute the difference between each PCV and its following PCV. $\party_0$ computes the modular additive inverse of its shares. The proxies apply a common random permutation to the bits of each share in the list to prevent $\party_2$ from learning the non-zero relative differences. They also permute the list of shares using a common random permutation to shuffle the order of the real test samples. Then, they send the list of shares to $\party_2$. $\party_2$ XORes two shares and maps the result to one, if it is greater than zero and zero otherwise. Then, proxies privately map PCVs to zero if they equal to their previous PCV and one otherwise. This phase is depicted in Algorithm \ref{alg:ties}. In Algorithm \ref{alg:auc2}, $\party_0$ and $\party_1$ use these mappings to take only the PCVs which are different from their subsequent PCV into account in the computation of the AUROC based on Equation \ref{eq:auc_with_tie}. In Algorithm \ref{alg:auc2}, $\mathsf{DIV}$ that we adapted from SecureNN can be used since the numerator is always smaller than or equal to the denominator, as in the AUROC computation described in Section \ref{sec:aucwotie}.


\subsection{Secure AUPR Computation}
As in the AUROC computation described in Section \ref{sec:auctie}, $\party_0$ and $\party_1$ map a PCV in the global sorted list to zero if it equals the previous PCV and one otherwise by running Algorithm \ref{alg:ties}. Then, we use Equation \ref{eq:auc_with_tie} to calculate AUPR as shown in Algorithm \ref{alg:prc}. The most significant difference of AUPR from AUROC with tie computation is that the denominator of each precision value is different in the AUPR calculation. Thus, we need to perform computation of precision for each iteration in advance, which requires a vectorized division operation before iterating the list of PCVs mapped to one.

\IncMargin{1.5em}
\begin{algorithm}[!b]
\footnotesize
\DontPrintSemicolon
\SetKwInOut{Input}{input}\SetKwInOut{Output}{output}
\Input{$\langle \pl \rangle_i = (\{\langle con_1\rangle_i,\langle label_1\rangle_i\},...,$ $\{\langle con_\nsamples \rangle_i,\langle label_\nsamples \rangle_i\} )$, $\langle \pl \rangle_i$ is a share of the global sorted list of PCVs, and labels}
$\party_0$ and $\party_1$ hold a common random permutation $\pi$ for $\nsamples$ items\;

For each $i \in \{0, 1\}$, $\party_i$ executes Steps $3$-$19$\;

$\langle TP[0]\rangle_i \gets 0$, $\langle RC[0]\rangle_i \gets 0$, $\langle pPC\rangle_i \gets i$, $\langle pRC\rangle_i \gets 0$, $\langle N_1\rangle_i \gets 0$, $\langle N_2\rangle_i \gets 0$\;

\For{$j \leftarrow 1$ \KwTo $\nsamples$}{
      $\langle TP[j]\rangle_i \gets \langle TP[j-1]\rangle_i + \langle \pl[j].label\rangle_i$\;
      
      $\langle RC[j]\rangle_i \gets \langle RC[j-1]\rangle_i + i$\;
}
$\langle T\_TP\rangle_{i}=\pi(\langle TP\rangle_{i})$\;
$\langle T\_RC\rangle_{i}=\pi(\langle RC\rangle_{i})$\;
$\langle T\_PC \rangle_i \gets \mathsf{DIV}(\langle T\_TP \rangle_i, \langle T\_RC \rangle_i)$\;
$\langle PC\rangle_{i}=\pi'(\langle T\_PC\rangle_{i})$\;    
\For{$j \leftarrow 1$ \KwTo $\nsamples$}{
      $\langle A\rangle_i \gets \mathsf{MUL}([\langle pPC\rangle_i \langle RC[j]\rangle_i - \langle pRC\rangle_i], [\langle RC[j]\rangle_i - \langle pRC\rangle_i, \langle PC[j]\rangle_i-\langle pPC\rangle_i])$\;
      
      $\langle A\rangle_i \gets \mathsf{MUL}(\langle A\rangle_i, [\langle \pl[j].con\rangle_i, \langle \pl[j].con\rangle_i])$\;
      
      $\langle N_1\rangle_i \gets \langle N_1\rangle_i + \langle A[0] \rangle_i$\;
      
      $\langle N_2\rangle_i \gets \langle N_2\rangle_i+\langle A[1] \rangle_i$\;

      $[\langle pPC \rangle_i, \langle pRC \rangle_i] \gets \newline \mathsf{MUX}( [\langle pPC \rangle_i, \langle pRC \rangle_i], [\langle PC[j] \rangle_i, \langle RC[j] \rangle_i], \newline [\langle \pl[j].con \rangle_i, \langle \pl[j].con \rangle_i])$ \;    
}

$\langle N\rangle_i \gets 2\cdot\langle N_1\rangle_i+\langle N_2\rangle_i$\;
$\langle D\rangle_i \gets 2\cdot \langle TP[\nsamples]\rangle_i$\;
$\langle PRC\rangle_i \gets \mathsf{DIV}(\langle N\rangle_i,\langle D\rangle_i)$\;
\caption{Secure AUPR computation}
\label{alg:prc}
\end{algorithm}
\DecMargin{1.5em}

\section{Security Analysis}
In this section, we provide semi-honest simulation-based security proofs for the computations of \fw{} based on the security of the utilized building blocks of \cecilia{}.

\begin{lemma}
\label{lemma:auc}
The protocol in Algorithm \ref{alg:auc1} securely computes AUROC in the $(\mathcal{F}_{\mathsf{MUL}},\mathcal{F}_{\mathsf{DIV}})$ hybrid model.  
\end{lemma}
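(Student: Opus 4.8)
The plan is to follow the two-part template already used for the building-block lemmas: first establish correctness by showing that the reconstructed output of Algorithm \ref{alg:auc1} equals the AUROC of Equation \ref{eq:auc_wo_tie}, and then establish simulation-based security by exploiting that, in the $(\mathcal{F}_{\mathsf{MUL}},\mathcal{F}_{\mathsf{DIV}})$ hybrid model, every communicating step is delegated to an ideal functionality that returns fresh shares.

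For correctness, I would track the reconstructed values of the accumulators after the loop processes the $k$-th list entry. Because $\langle P\rangle_i \gets \langle P\rangle_i + i$ contributes $1$ only through the share held by $S_1$, the reconstructed $P$ equals the running count of processed samples, and since $TP$ is the running sum of labels, $FP = P - TP$ is the running count of negatives; these are exactly the true-positive and false-positive counts at threshold $V[k]$. Letting $TP_k, FP_k$ denote these reconstructed values, we have $TP_k = P_{\text{tot}}\,T[k]$ and $FP_k = N_{\text{tot}}\,F[k]$, where $P_{\text{tot}}$ and $N_{\text{tot}}$ are the totals reached after the final iteration. I would then argue that the accumulated numerator equals $N = \sum_{k=1}^{M} TP_k\,(FP_k - FP_{k-1})$, using the fact that $pFP$ holds $FP_{k-1}$ and that the initialisation $\langle pFP\rangle_i \gets 0$ matches $F[0]=0$, and that $D = \mathsf{MUL}(TP,FP) = P_{\text{tot}}\cdot N_{\text{tot}}$ after the loop. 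Substituting $T[k]=TP_k/P_{\text{tot}}$ and $F[k]=FP_k/N_{\text{tot}}$ into Equation \ref{eq:auc_wo_tie} shows its right-hand side equals $N/D$, so the final $\mathsf{DIV}$ call returns the (scaled) AUROC; here I would invoke the precondition that $S_0$ and $S_1$ know $M$ and hence an upper bound on both $N$ and $D$, which is precisely the condition under which $\mathsf{DIV}$ is guaranteed correct.

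For security, I would exhibit a simulator in the hybrid model. The key observation is that Algorithm \ref{alg:auc1} performs no direct communication: each step is either a local affine operation on shares or a call to $\mathcal{F}_{\mathsf{MUL}}$ or $\mathcal{F}_{\mathsf{DIV}}$, each of which outputs fresh shares. Consequently, for a corrupted $S_i$ with $i\in\{0,1\}$ the entire view consists of its input shares together with the fresh output shares of the $M+2$ ideal-functionality invocations; the simulator samples each such share uniformly at random, which is perfectly indistinguishable from the real execution since fresh shares are independent of the underlying secret. The corrupted-$S_2$ case is even simpler, as $S_2$ never receives a message in the local steps and only participates through the ideal functionalities, whose simulators are already supplied by Lemmas \ref{lemma:mul} and \ref{lemma:div}. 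Security then follows by the standard composition of these functionalities.

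I expect the main obstacle to lie in the correctness bookkeeping rather than in the security argument: one must verify carefully that the off-by-one storage of $FP_{k-1}$ in $pFP$, together with the zero initialisation matching $T[0]=F[0]=0$, makes the accumulated sum reproduce Equation \ref{eq:auc_wo_tie} exactly, and that the scaling factor $F$ introduced inside $\mathsf{DIV}$ is properly accounted for in the claimed output. The security half is comparatively routine once the hybrid framing is in place, because the absence of any plaintext communication outside the ideal functionalities makes every observed message a fresh, independently random share.
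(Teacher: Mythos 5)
Your proposal is correct and follows essentially the same two-part structure as the paper's proof: correctness by showing the accumulators $N$ and $D$ are the unnormalized numerator and denominator of Equation \ref{eq:auc_wo_tie} (with $TP_k/P_{\text{tot}} = T[k]$ and $FP_k/N_{\text{tot}} = F[k]$), and security by observing that the view of each party consists only of fresh shares produced by the $\mathcal{F}_{\mathsf{MUL}}$ and $\mathcal{F}_{\mathsf{DIV}}$ calls, hence is perfectly simulatable with uniform randomness. Your bookkeeping of the normalization and the off-by-one handling of $pFP$ is in fact somewhat more explicit than the paper's own argument, and your remark about tracking the scaling factor $F$ in $\mathsf{DIV}$ is a reasonable extra care the paper omits.
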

\begin{proof}
In the protocol, we separately calculate the numerator $N$ and the denominator $D$ of the AUROC, which can be expressed as $AUROC=\frac{N}{D}$. Let us first focus on the computation of $D$. It is equal to the multiplication of the number of samples with label $1$ by the number of samples with label $0$. In the end, we have the number of samples with label $1$ in $TP$ and calculate the number of samples with label $0$ by $P - TP$. Then, the computation of $D$ is simply the multiplication of these two values. In order to compute $N$, we employed Equation \ref{eq:auc_wo_tie}. We have already shown the denominator part of it. For the numerator part, we need to multiply the current $TP$ by the change in $FP$ and sum up these multiplication results. $\langle A\rangle \gets \mathsf{MUL}(\langle TP\rangle,\langle FP\rangle-\langle pFP\rangle)$ computes the contribution of the current sample on the denominator and we accumulate all the contributions in $N$, which is the numerator part of Equation \ref{eq:auc_wo_tie}. Therefore, we can conclude that we correctly compute the AUROC.

Next, we prove the security of our protocol. $\party_i$ where $i \in \{0,1\}$ sees $\{\langle A \rangle\}_{j \in \nsamples}$, $\langle D \rangle$ and $\langle ROC \rangle$, which are fresh shares of these values. Thus the view of $\party_i$ is perfectly simulatable with uniformly random values. 
\end{proof}

\begin{lemma}
\label{lemma:aucdetec}
The protocol in Algorithm \ref{alg:ties} securely marks the location of ties in the list of prediction confidences.
\end{lemma}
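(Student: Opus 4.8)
The plan is to follow the two-part template used for the preceding lemmas: first establish \textbf{correctness} --- that after the protocol each reconstructed marker $\langle T[j].con\rangle_0+\langle T[j].con\rangle_1$ equals $0$ exactly when $con_j=con_{j+1}$ (a tie) and $1$ otherwise for $1\le j\le M-1$, while the last marker reconstructs to $1$ --- and then establish \textbf{security} by giving, for each corruptible party, a simulator whose output matches that party's real view.

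For correctness I would trace the shares through the protocol. After the difference step (Step~$6$) the pair $(\langle C[j]\rangle_0,\langle C[j]\rangle_1)$ is an additive sharing over $\mathbb{Z}_L$ of $\mathrm{diff}_j:=con_j-con_{j+1}$. The key observation is that once $S_0$ replaces its share by $L-\langle C[j]\rangle_0$ (Step~$8$), the value $g_j:=(L-\langle \mathrm{diff}_j\rangle_0)\oplus\langle \mathrm{diff}_j\rangle_1$ vanishes iff $L-\langle \mathrm{diff}_j\rangle_0\equiv\langle \mathrm{diff}_j\rangle_1\pmod{L}$, i.e.\ iff $\langle \mathrm{diff}_j\rangle_0+\langle \mathrm{diff}_j\rangle_1\equiv0\pmod{L}$, i.e.\ iff $con_j=con_{j+1}$; thus the additive zero-test is converted into a \emph{bitwise} zero-test that $S_2$ can evaluate with a single XOR. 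I then verify that no later masking disturbs this test: the common mask $R[j]$ is XORed by both proxies and hence cancels when $S_2$ reconstructs $\langle D\rangle_0\oplus\langle D\rangle_1$; the bit permutation $\sigma_j$ commutes with XOR and fixes the all-zero string, so it preserves both the test's value and the non-zero status that $S_2$ later collapses to $1$; the list permutation $\pi$ (Step~$11$) is inverted before the markers are read back, restoring the original index alignment; and the dummies inserted before transmission (Step~$12$) are stripped afterwards. Finally the assignment $\langle T[M].con\rangle_i=i$ reconstructs to $1$, correctly flagging the last element, which has no successor to tie with. Composing these facts yields the claimed markers.

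For security the proxy side is immediate: throughout the protocol $S_0$ and $S_1$ only consume their shared randomness $\pi,R,\sigma$ and, at the end, receive fresh re-shares of $D$ from $S_2$; a fresh additive share is uniform and independent of the inputs, so each proxy's view is simulated by uniform sampling. The substantive case is a corrupted $S_2$, whose view consists of the two incoming lists $(\langle D\rangle_0,\langle D\rangle_1)$ and the reconstructed list $D$. Each incoming share is individually uniform thanks to the one-time mask $R[j]$ and the fresh re-sharing, so those are simulated by uniform strings; the information $S_2$ actually extracts is the reconstructed list, a $\pi$-permuted sequence of the bit-permuted tests $\sigma_j(g_j)$ padded with dummy zeros and non-zeros. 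I would argue this list is independent of the identities of the tied samples: $\sigma_j$ replaces each non-tie value by a randomly re-encoded string so no genuine difference is exposed, $\pi$ severs the correspondence between revealed positions and true sample indices, and the arbitrary dummy insertions conceal the true number and placement of ties. The simulator therefore samples its own permutation and dummy count and fills the list with uniform zeros and non-zeros, reproducing the real distribution.

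The main obstacle I anticipate is exactly this last step: arguing rigorously that the \emph{reconstructed} list leaks nothing usable. Because a bit permutation preserves Hamming weight, $\sigma_j(g_j)$ is not perfectly uniform over non-zero strings, so one must show that the residual statistics of the masked differences cannot be linked back to individual samples once $\pi$ and the dummy padding are present. I would resolve this by defining the ideal functionality so that the only thing it discloses to the adversary is this de-identified zero/non-zero pattern, and then showing the simulator can reproduce precisely that pattern; relative to this, the algebraic correctness steps are routine.
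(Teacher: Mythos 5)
Your proposal is correct and follows essentially the same route as the paper's own proof: the correctness argument rests on the identity that $(L-\langle a\rangle_0)\oplus\langle a\rangle_1=0$ iff $a=0$, with the XOR mask $R[j]$ cancelling, $\sigma_j$ commuting with XOR and fixing the zero string, and $\pi$ plus the dummies being undone afterwards; the security argument for the proxies is fresh re-sharing and for $S_2$ is the permutations plus dummy padding. The residual-leakage issue you flag (that $\sigma_j$ preserves Hamming weight, so the reconstructed non-zero entries are not uniform and could in principle retain statistics of the masked differences) is a genuine subtlety that the paper's proof also does not resolve --- it simply asserts that the items are hidden --- so your honesty about needing to fold this into the ideal functionality's permitted leakage is, if anything, more careful than the original.
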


\begin{proof}
For the correctness of our protocol, we need to prove that for each index $j$ in $\pl$, $L[j].con=0$ if $(C[j]-C[j+1])=0$, $L[j].con=1$, otherwise. We first calculate the difference of successive items in $C$. Assume we have two additive shares $(\langle a \rangle_0,\langle a \rangle_1)$ of $a$ over the ring $\mathbb{Z}_\ring$. If $a=0$, then $(\ring - \langle a \rangle_0) \oplus \langle a \rangle_1 = 0$ and if $a\neq0$, then $(\ring - \langle a\rangle_0) \oplus \langle a\rangle_1 \neq 0$ where $\ring - \langle a\rangle_0$ is the additive modular inverse of $\langle a\rangle_0$. We use this fact in our protocol. $\party_0$ computes the additive inverse of each item $\langle c\rangle_0$ in $\langle C\rangle_0$ which is denoted by $\langle c\rangle_0'$, XORes $\langle c\rangle_0'$ with a common random number in $R$, which is denoted by $\langle c\rangle_0''$ and permutes the bits of $\langle c\rangle_0''$ with a common permutation $\sigma$ which is denoted by $\langle c\rangle_0'''$. $\party_1$ XORes each item $\langle c\rangle_1$ in $\langle C\rangle_1$ with a common random number in $R$ which is denoted by $\langle c\rangle_1''$ and permutes the bits of $\langle c\rangle_1''$ with a common permutation $\sigma$ which is denoted by $\langle c\rangle_1'''$. $\party_i$ where $i \in \{0,1\}$ permutes values in $\langle C\rangle_i'''$ by a common random permutation $\pi$ which is denoted by $\langle D\rangle_i$. After receiving $\langle D\rangle_0$ and $\langle D\rangle_1$, $\party_2$ maps each item $d$ of $D$ to $0$ if $\langle d\rangle_0^\prime \oplus \langle d\rangle_1 = 0$ which means $\langle d\rangle_0 + \langle d\rangle_1 = 0$ and maps $1$  if $\langle d\rangle_0^\prime \oplus \langle d\rangle_1 \neq 0$ which means $\langle d\rangle_0 + \langle d\rangle_1 \neq 0$. After receiving a new share of $D$ from $\party_2$, $\party_i$ where $i \in \{0,1\}$ removes dummy values and permutes remaining values by $\pi^{-1}$. Therefore, our protocol correctly maps items of $C$ to $0$ or $1$.

We next prove the security of our protocol. $\party_i$ where $i \in \{0,1\}$ calculates the difference of successive prediction values. The view of $\party_2$ is $D$, which includes real and dummy zero values. $\party_i$ XORes each item of $\langle C\rangle_i$ with fresh boolean shares of zero, applies a random permutation to bits of each item of $\langle C\rangle_i$, applies a random permutation $\pi$ to $\langle C\rangle_i$ and add dummy zero and non-zero values. Thus the differences, the positions of the differences, and the distribution of the differences are completely random. The number of zero and non-zero values are not known to $\party_2$ due to dummy values. With common random permutations $\sigma_{j \in \nsamples}$ and common random values $R[j], j \in \nsamples$, each item in $C$ is hidden. Thus $\party_2$ can not infer anything about real values in $C$. Furthermore, the number of repeating predictions is not known to $\party_2$ due to the random permutation $\pi$.
\end{proof}

\begin{lemma}
\label{lemma:auctie}
The protocol in Algorithm \ref{alg:auc2} securely computes AUROC in ($\mathcal{F}_{\mathsf{MUL}}$,$\mathcal{F}_{\mathsf{MUX}}$,$\mathcal{F}_{\mathsf{DIV}}$) hybrid model.  
\end{lemma}
\begin{proof}
In order to compute the AUROC in case of a tie, we utilize Equation \ref{eq:auc_with_tie}, of which we calculate the numerator and the denominator separately. The calculation of the denominator $D$ is the same as Lemma \ref{lemma:auc}. The computation of the numerator $N$ has two different components, which are $N_1$ and $N_2$. $N_1$, more precisely the numerator of $T[i-1] * (F[i] - F[i-1])$, is similar to the no-tie version of privacy preserving AUROC computation. This part corresponds to the rectangle areas in the ROC curve.  The decision of adding this area $A$ to the cumulative area $N_1$ is made based on the result of the multiplication of $A$ by $L.con$. $L.con=1$ indicates if the sample is one of the points of prediction confidence change, $0$ otherwise. If it is $0$, then $A$ becomes $0$ and there is no contribution to $N_1$. If it is $1$, then we add $A$ to $N_1$. On the other hand, $N_2$, which is the numerator of $(T[i] - T[i-1]) * (F[i] - F[i-1])$, accumulates the triangular areas. We compute the possible contribution of the current sample to $N_2$. In case this sample is not one of the points that the prediction confidence changes, which is determined by $L.con$, then the value of $A$ is set to $0$. If it is, then $A$ remains the same. Finally, $A$ is added to $N_2$. Since there is a division by $2$ in the second part of Equation \ref{eq:auc_with_tie}, we multiply $N_1$ by $2$ to make them have common denominator. Afterwards, we sum $N_1$ and $N_2$ to obtain $N$. In order to have the term $2$ in the common denominator, we multiply $D$ by $2$. As a result, we correctly compute the denominator and the nominator of the AUROC. 

Next, we prove the security of our protocol. $\party_i$ where $i \in \{0,1\}$ sees $\{\langle A \rangle\}_{j \in \nsamples}$, $\{\langle pFP \rangle\}_{j \in \nsamples}$, $\{\langle pTP \rangle\}_{j \in \nsamples}$, $\langle D \rangle$ and $\langle ROC \rangle$, which are fresh shares of these values. Thus the view of $\party_i$ is perfectly simulatable with uniformly random values. 
\end{proof}

\begin{lemma}
\label{lemma:prc}
The protocol in Algorithm \ref{alg:prc} securely computes AUPR in ($\mathcal{F}_{\mathsf{MUL}}$,$\mathcal{F}_{\mathsf{MUX}}$,$\mathcal{F}_{\mathsf{DIV}}$) hybrid model.  
\end{lemma}

\begin{proof}
In order to compute the AUPR, we utilize Equation \ref{eq:auc_with_tie} of which we calculate the numerator and the denominator separately. We nearly perform the same computation with the AUROC computation in case of a tie. The main difference is that we need to perform a division to calculate each precision value because denominators of each precision value are different. The rest of the computation is the same with the computation in Algorithm \ref{alg:auc2}. The readers can follow the proof of Lemma \ref{lemma:auctie}. 

Next, we prove the security of our protocol. $\party_i$ where $i \in \{0,1\}$ sees $\{\langle T\_PC \rangle\}_{j \in \nsamples}$, $\{\langle A \rangle\}_{j \in \nsamples}$, $\{\langle pPC \rangle\}_{j \in \nsamples}$, $\{\langle pRC \rangle\}_{j \in \nsamples}$ and $\langle PRC \rangle$, which are fresh shares of these values. Thus the view of $\party_i$ is perfectly simulatable with uniformly random values. 
\end{proof}

\begin{lemma}
\label{lemma:sorting}
The sorting protocol in Section 5 securely merges two sorted lists in  ($\mathcal{F}_{\mathsf{CMP}}$,$\mathcal{F}_{\mathsf{MUX}}$) hybrid model.  
\end{lemma}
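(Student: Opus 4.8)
The plan is to follow the two-part template used for the preceding lemmas: first argue correctness of the merge, and then build a simulator in the $(\mathcal{F}_{\mathsf{CMP}},\mathcal{F}_{\mathsf{MUX}})$ hybrid model. Because every step of the protocol is either a local manipulation of shares or a call to $\mathsf{CMP}$ or $\mathsf{MUX}$, and both of these have already been shown to securely realize their functionalities (Lemmas~\ref{lemma:cmp} and~\ref{lemma:mux}), the task reduces to showing that composing these ideal calls with the local operations (moving heads, the $\delta$-block bookkeeping, and the common random permutations) both merges correctly and reveals nothing beyond what the functionality is allowed to leak.

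For correctness I would induct on the number of elements already pushed to the merged list, maintaining the invariant that after each shuffle the two working lists $T_{i_k}$ and $T_{j_k}$ are still individually sorted in descending order and that $T_{i_k}$ dominates $T_{j_k}$ elementwise. One shuffle consists of an elementwise comparison via $\mathsf{CMP}$ followed by the two $\mathsf{MUX}$ calls described in Section~\ref{sec:framework}: the first writes $\max(T_{i_k}[t],T_{j_k}[t])$ into $T_{i_k}[t]$ and the second writes the minimum into $T_{j_k}[t]$ for every index $t$. Since both input lists are sorted, the pointwise maximum and minimum are again sorted, and the head of $T_{i_k}$ becomes the largest remaining PCV; by correctness of $\mathsf{CMP}$ and $\mathsf{MUX}$ these selections are exact. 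Moving the head (and, when $\delta \neq 1$, repeatedly comparing the two current heads and emitting the larger for $\delta$ steps before reshuffling) is then the standard two-pointer merge and extracts the PCVs in descending order. I would close with the boundary rule that reverts to $\delta = 1$ once $|T_{j_k}| = 1$, which serves as the base case completing the merge.

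For security, in the hybrid model every invocation of $\mathsf{CMP}$ and $\mathsf{MUX}$ is an ideal call whose output to each proxy is a fresh secret share, and the helper $S_2$ participates only inside those ideal calls. The simulator is therefore the natural composition of the simulators of Lemmas~\ref{lemma:cmp} and~\ref{lemma:mux}: for a corrupted proxy it substitutes uniformly random shares for every value returned by an ideal call and applies the (commonly known) permutations exactly as the honest party would; for a corrupted $S_2$ it simply runs the underlying hybrid simulators. As the proxies never reconstruct their shares during shuffling — the shuffle comparison is consumed inside $\mathsf{MUX}$ and so stays shared — every intermediate list a proxy holds is a vector of fresh shares and is distributed identically in the real and ideal worlds.

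The main obstacle is the within-block \emph{moving} phase, where the outcome of each head-to-head comparison is reconstructed in the clear rather than kept shared. These revealed bits are genuine leakage and cannot be produced from randomness alone, so the merge functionality must be defined to output the induced sequence of comparison results, which the simulator then replays. The delicate part is arguing that this leakage is precisely the \emph{candidate-matching} information characterized in the sorting analysis: because every reconstructed comparison acts on heads drawn from a set of at least two shuffled candidates, the revealed bits are consistent with many admissible interleavings of the two original lists and so disclose the matching statistics — whose granularity is controlled by $\delta$ — rather than the PCV values themselves. Showing that the simulator fed with exactly this leakage is indistinguishable from the real view, and that the $\delta = 1$ fallback bounds the worst-case matching probability, is the heart of the argument.
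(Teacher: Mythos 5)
Your proposal follows the same two-part template as the paper's proof and the correctness halves essentially coincide: both argue that a shuffle (one $\mathsf{CMP}$ plus two $\mathsf{MUX}$ calls per index) places the pointwise maxima in the first list and the pointwise minima in the second, that both working lists remain sorted so the head of the first list is the global maximum, that the within-block moving phase is then an ordinary two-pointer merge, and both invoke the $\delta$ adjustment rules (longer list as $L_1$, capping $\delta$ by $|L_2|$, falling back to $\delta=1$ when $|L_2|=1$) to handle the boundary. The security halves also share the same skeleton — compose the simulators of the $\mathsf{CMP}$ and $\mathsf{MUX}$ lemmas in the hybrid model and observe that every value a proxy receives from an ideal call is a fresh share — but you diverge from the paper on one substantive point, and it is worth flagging. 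The paper's proof asserts that the proxies' entire view is ``perfectly simulatable with random values due to the shuffling process performed at regular intervals,'' which glosses over the fact that the reconstructed outputs of $\mathsf{CMP}$ in the moving phase are opened bits whose joint distribution depends on the actual interleaving of the two input lists; they are not uniform and cannot be sampled by a simulator that is given nothing. You correctly identify this as genuine leakage that must be written into the ideal functionality (or accepted as the quantified privacy loss governed by $\delta$, as the main text's candidate-matching discussion does informally), and you locate the real remaining work in showing that this leakage is exactly the matching information and no more. That is a more honest accounting than the paper's own proof; the cost is that your argument, as stated, still leaves that final indistinguishability step as an acknowledged obligation rather than a completed proof, whereas the paper simply claims the stronger (and, as written, not fully justified) statement.
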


\begin{proof}
First, we prove the correctness of our merge sort algorithm. $\pl_1$ and $\pl_2$ are two sorted lists. In the merging of $\pl_1$ and $\pl_2$, the corresponding values are first compared using the secure $\mathsf{CMP}$ operation. The larger values are placed in $\pl_1$ and the smaller values are placed in $\pl_2$, after the secure  $\mathsf{MUX}$ operation is called twice. This process is called \textit{shuffling} because it shuffles the corresponding values in the two lists. After the shuffling process, we know that the largest element of the two lists is the top element of $\pl_1$. Therefore, it is removed and added to the global sorted list $\pl_3$. On the next step, the top elements of $\pl_1$ and $\pl_2$ are compared with the  $\mathsf{CMP}$ method. The comparison result is reconstructed by $\party_0$ and $\party_1$ and the top element of $\pl_1$ or $\pl_2$ is removed based on the result of $\mathsf{CMP}$ and added to $\pl_3$. The selection operation also gives the largest element of $\pl_1$ and $\pl_2$ because $\pl_1$ and $\pl_2$ are sorted. We show that shuffling and selection operations give the largest element of two sorted lists. This ensures that our merge sort algorithm that only uses these operations correctly merges two sorted lists in ordered manner.

Next, we prove the security of our merge sort algorithm. In the shuffling operation, $\mathsf{CMP}$ and $\mathsf{MUX}$ operations are called. 
$\mathsf{CMP}$ outputs fresh shares of comparison of corresponding values in $\pl_1$ and $\pl_2$. Shares of these comparison results are used in $\mathsf{MUX}$ operations and $\mathsf{MUX}$ operation generates fresh shares of the corresponding values. Therefore, $\party_0$ and $\party_1$ cannot precisely map these values to the values in $\pl_1$ and $\pl_2$. In the selection operation, $\mathsf{CMP}$ is called and the selection is performed based on the reconstructed output of $\mathsf{CMP}$.  $\party_0$ and $\party_1$ are still unable to map the values added to $L_3$ to the values in $\pl_1$ and $\pl_2$ precisely because at least one shuffling operation took place before these repeated selection operations. Shuffling and $\delta-1$ selection operations are performed repeatedly until the $\pl_1$ is empty. After each shuffling operation, the fresh share of the larger corresponding values in $\pl_1$ and the fresh share of the smaller corresponding values in $\pl_2$ are stored. The view of $\party_0$ and $\party_1$ are perfectly simulatable with random values due to the shuffling process performed at regular intervals. 

It is possible in some cases to use unshuffled values in selection operations. To prevent this, the following rules are followed in the execution of the merge protocol. If there are two lists that do not have the same length, the longer list is chosen as $\pl_1$. If the $\delta$ is greater than the length of the $\pl_2$ list, it is set to the largest odd value smaller or equal to the length of $\pl_2$ so that the unshuffled values that $\pl_1$ may have are not used in selection processes. If the length of $\pl_2$ is reduced to $1$ at some point in the sorting, the $\delta$ is set to $1$. Thus $\pl_2$ will have $1$ element until the end of the merge and shuffling is done before each selection. After moving $\delta$ values to the sorted list, if the length of $\pl_2$ is greater than the length of $\pl_1$, we switch the list.
\end{proof}


\subsection{Privacy against Malicious Adversaries}
\citet{araki2016high} defined the notion of privacy against malicious adversaries in the client-server setting. In this setting, the servers performing secure computation on the shares of the inputs to produce the shares of the outputs do not see the plain inputs and outputs of the clients. This notion of privacy says that a malicious party cannot break the privacy of input and output of the honest parties. This setting is very similar to our setting. In our framework, two parties exchange a seed which is used to generate common random values between them. Two parties randomize their shares using these random values, which are not known to the third party. It is very easy to add fresh shares of zero to outputs of two parties with common random values shared between them. In our algorithms, we do not state the randomization of outputs with fresh shares of zero. 
Thus, our framework provides privacy against a malicious party by relying on the security of a seed shared between two honest parties.

\begin{figure*}[!ht]
    \centering
    \captionsetup[subfigure]{justification=centering}
    \begin{subfigure}{0.47\linewidth}
        \includegraphics[width=0.95\linewidth]{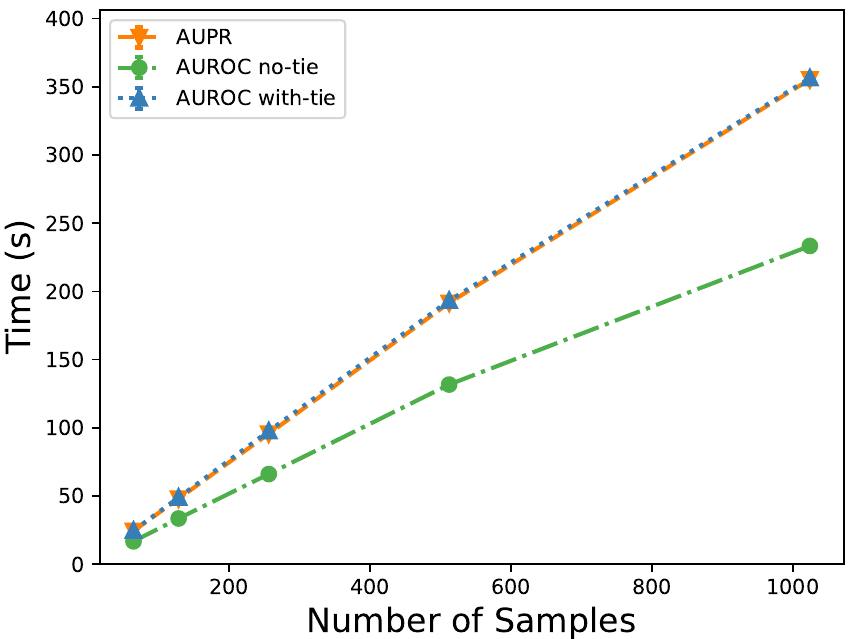}
        \caption{}
        \label{fig:n_samples}
    \end{subfigure}
    \begin{subfigure}{0.47\linewidth}
        \centering
        \includegraphics[width=0.95\linewidth]{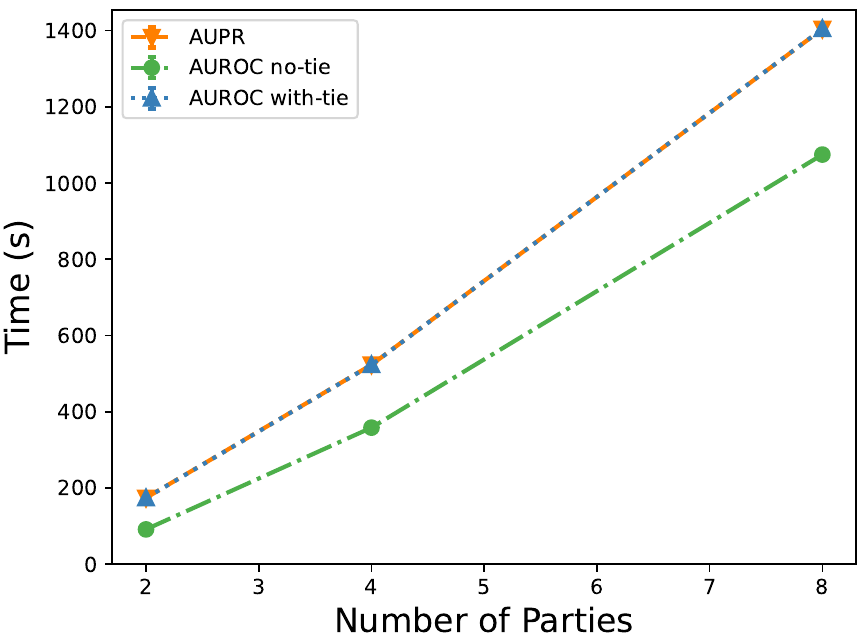}
        \caption{}
        \label{fig:n_parties}
    \end{subfigure}
    \begin{subfigure}{0.48\linewidth}
        \centering
        \includegraphics[width=0.95\linewidth]{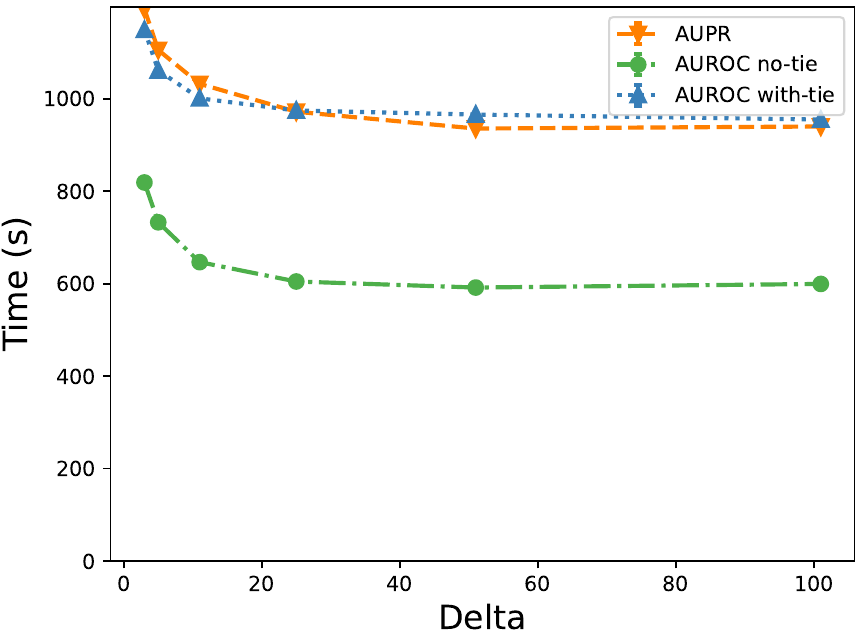}
        \caption{}
        \label{fig:delta}
    \end{subfigure}
    \caption{\textbf{(a)} The execution time of various settings to evaluate the scalability of \fw{} to the number of samples for a fixed number of parties and \textbf{(b)}  to the number of parties for a fixed number of samples in each party. \textbf{(c)} The effect of $\delta$ on the execution time is shown.}
    \label{fig:exe_time}
\end{figure*}

\section{Dataset}
To demonstrate the correctness of \fw{} and its applicability to a real-life problem, we utilized the Acute Myeloid Leukemia (AML) dataset \footnote{\tiny{\url{https://www.synapse.org/\#!Synapse:syn2700200}}}$^,$\footnote{\tiny{\url{https://www.synapse.org/\#!Synapse:syn2501858}}} from the first subchallenge of the DREAM Challenge \citep{noren2016crowdsourcing}. We chose the submission of the team with the lowest score in the leaderboard with accessible files, which is the team \textit{Snail}. The training dataset has $191$ samples, among which $136$ patients have complete remission. Additionally, we used the UCI Heart Disease dataset \footnote{\tiny{\url{https://archive.ics.uci.edu/ml/datasets/heart+disease}}} for the correctness analysis of \fw{}. In the test set, we have $54$ samples with binary outcome.

Moreoever, we aimed to analyze the scalability of \fw{} for different settings. For this purpose, we generated a synthetic dataset with no restriction other than having the PCVs between $0$ and $1$.


\section{Results} \label{sec:results}
In this section, we will give the details of our experiments conducted via \fw{} to demonstrate its correctness and scalability.

\subsection{Experimental Setup} 
We conducted our experiments on both LAN and WAN settings. In the LAN setting, we ran the experiments with $0.18$ ms round trip time (RTT). In the WAN setting, we simulated the network connection with $10$ ms RTT.


\subsection{Correctness Analysis} 
We conducted the experiments for the correctness analysis on the LAN setting. To assess the correctness of AUROC with tie, we computed the AUROC of the predictions on the AML dataset by \fw{} and compared it to the result obtained without privacy on the DREAM Challenge dataset. We obtained $AUROC = 0.693$ in both settings. To check the correctness of AUROC with no-tie of \fw{}, we randomly picked one of the samples in tie condition in DREAM Challange dataset and generated a subset of the samples with no tie. We obtained the same AUROC with no-tie version of AUROC of \fw{} as the non-private computation. We directly used the UCI dataset in AUROC with no-tie since it does not have any tie condition. The result, which is $AUROC = 0.927$, is the same for both private and non-private computation. Additionally, we verified that \fw{} computes the same AUPR as for the non-private computation for both the DREAM Challenge and the UCI dataset. AUPR scores are $AUPR=0.844$ and $AUPR=0.893$, respectively. These results indicate that \fw{} can privately compute the exact same AUC as one could obtain on the pooled test samples.

\begin{table*}[ht]
    \small
    \centering
    \renewcommand{\arraystretch}{1.2}
    \begin{tabularx}{\textwidth}{S{5}{1.47} S{10}{0.5} S{20}{1.85} S{20}{1.85} S{20}{1.85} S{20}{1.85} F}
    \toprule
     & & \multicolumn{4}{>{\columncolor{gray!20!white}}c}{Communication Costs (MB)} & \\
    \hhline{*1{>{\arrayrulecolor{gray!5!white}}-}*1{>{\arrayrulecolor{gray!10!white}}-}*4{>{\arrayrulecolor{black}}-}*1{>{\arrayrulecolor{gray!30!white}}-}}{\arrayrulecolor{black}}
    D $\times$ \nsamples & $\delta$ & $P_1$ & $P_2$ & Helper & Total &  Time (sec)\\ 

    \midrule{{\arrayrulecolor{black}}}
     
    $3 \times 64 $ & $1$ & $ 1.96 $ & $ 1.3 $ & $ 1.13 $ & $ 4.39 $ & $ 24.41 $ \\
    $3 \times 128 $ & $1$ & $ 6.61 $ & $ 4.14 $ & $ 3.97 $ & $ 14.72 $ & $ 48.05 $ \\
    $3 \times 256 $ & $1$ & $ 24.44 $ & $ 15.23 $ & $ 15.06 $ & $ 54.73 $ & $ 95.65 $ \\
    $3 \times 512 $ & $1$ & $ 93.23 $ & $ 58.44 $ & $ 58.26 $ & $ 209.93 $ & $ 191.55 $ \\
    $3 \times 1024 $ & $1$ & $ 359.67 $ & $ 226.62 $ & $ 226.41 $ & $ 812.7 $ & $ 355.32 $ \\
    \midrule
    $ 2 \times 1000$ & $1$ & $ 125.05 $ & $ 78.37 $ & $ 78.19 $ & $ 281.61 $ & $ 174.16 $ \\
    $ 4 \times 1000$ & $1$ & $ 726.44 $ & $ 458.81 $ & $ 458.57 $ & $ 1643.82 $ & $ 523.39 $ \\
    $ 8 \times 1000$ & $1$ & $ 3355.74 $ & $ 2125.91 $ & $ 2125.51 $ & $ 7607.16 $ & $ 1404.22 $ \\
    \midrule
    $8 \times 1000$ & $ 3 $ & $ 1692.85 $ & $ 1069.58 $ & $ 1069.25 $ & $ 3831.68 $ & $ 1194.08 $ \\
    $8 \times 1000$ & $ 5 $ & $ 1137.91 $ & $ 717.45 $ & $ 717.15 $ & $ 2572.51 $ & $ 1105.46 $ \\
    $8 \times 1000$ & $ 11 $ & $ 583.02 $ & $ 365.36 $ & $ 365.08 $ & $ 1313.46 $ & $ 1032.29 $ \\
    $8 \times 1000$ & $ 25 $ & $ 284.22 $ & $ 175.76 $ & $ 175.5 $ & $ 635.48 $ & $ 972.0 $ \\
    $8 \times 1000$ & $ 51 $ & $ 156.23 $ & $ 94.54 $ & $ 94.29 $ & $ 345.06 $ & $ 935.65 $ \\
    $8 \times 1000$ & $ 101 $ & $ 93.59 $ & $ 54.79 $ & $ 54.53 $ & $ 202.91 $ & $ 940.07 $ \\

    \midrule
    
    $8 \times UNB$ & $1$ & $ 130.48 $ & $ 81.82 $ & $ 81.6 $ & $ 293.9 $ & $ 379.99 $ \\

    \bottomrule
    \end{tabularx}
    \caption{Summary of the results of the experiments of AUPR computation with \fw{} on synthetic data. $D$ represents the number of data sources and $\nsamples$ represents the number of samples in one data source. $UNB$ represents the unbalanced sample distribution, which is $\{12,18,32,58,107,258,507,1008\}$.}
    \label{tab:syn_auprc_result_summary}
\end{table*}


\subsection{Scalability Analysis} 
We evaluated no-tie and with-tie versions of AUROC and AUPR of \fw{} with $\delta = 1$ on the settings in which the number of data sources is $3$ and the number of samples is $\nsamples \in \{64, 128, 256, 512, 1024\}$. The results showed that \fw{} scales almost quadratically in terms of both communication costs among all parties and the execution time of the computation. Figure \ref{fig:n_samples} displays the results. We also analyzed the performance of all computations of \fw{} on a varying number of data sources. We fixed $\delta = 1$ and the number of samples in each data sources to $1000$, and we experimented with $D$ data sources where $D \in \{2, 4, 8\}$. As Figure \ref{fig:n_parties} summarizes, \fw{} scales around quadratically to the number of data sources. We also analyzed the effect of $\delta \in \{3,5,11,25,51,101\}$ by fixing $D$ to $8$ and $\nsamples$ in each data source to $1000$. The execution time shown in Figure \ref{fig:delta} displays logarithmic decrease for increasing $\delta$. In all analyses, since the dominating factor is sorting, the execution times of the computations are close to each other. Additionally, our analysis showed that LAN is $12$ to $14$ times faster than WAN on average due to the high round trip time of WAN, which is approximately $10$ ms. However, even with such a scaling factor, \fw{} can be deployed in real life scenarios if the alternative is a more time-consuming approval process required for gathering all data in one place still protecting the privacy of data.

\section{Conclusion}
In this work, we presented an efficient solution based on a secure 3-party computation framework to compute AUC of the ROC and PR curves privately even when there exist ties in the PCVs. We benefited from the built-in building blocks of \cecilia{} and adapted the division operation of SecureNN to compute the exact AUC. \fw{} is secure against passive adversaries in the honest majority setting. We demonstrated that \fw{} can compute correctly and privately the exact AUC that one could obtain on the pooled plaintext test samples, and \fw{} scales quadratically to the number of both parties and samples. To the best of our knowledge, \fw{} is the first solution enabling the private and secure computation of the exact AUROC and AUPR.

\section*{Acknowledgement}
This study is supported by the DFG Cluster of Excellence “Machine Learning – New Perspectives for Science”, EXC 2064/1, project number 390727645 and the German Ministry of Research and Education (BMBF), project number 01ZZ2010.

\bibliographystyle{plainnat}  
\bibliography{references} 

\begin{thebibliography}{21}
\providecommand{\natexlab}[1]{#1}
\providecommand{\url}[1]{\texttt{#1}}
\expandafter\ifx\csname urlstyle\endcsname\relax
  \providecommand{\doi}[1]{doi: #1}\else
  \providecommand{\doi}{doi: \begingroup \urlstyle{rm}\Url}\fi

\bibitem[Araki et~al.(2016)Araki, Furukawa, Lindell, Nof, and
  Ohara]{araki2016high}
Toshinori Araki, Jun Furukawa, Yehuda Lindell, Ariel Nof, and Kazuma Ohara.
\newblock High-throughput semi-honest secure three-party computation with an
  honest majority.
\newblock In \emph{Proceedings of the 2016 ACM SIGSAC Conference on Computer
  and Communications Security}, pages 805--817, 2016.

\bibitem[Boyd et~al.(2015)Boyd, Lantz, and Page]{boyd2015differential}
Kendrick Boyd, Eric Lantz, and David Page.
\newblock Differential privacy for classifier evaluation.
\newblock In \emph{Proceedings of the 8th ACM Workshop on Artificial
  Intelligence and Security}, pages 15--23, 2015.

\bibitem[Byali et~al.(2020)Byali, Chaudhari, Patra, and Suresh]{byali2020flash}
Megha Byali, Harsh Chaudhari, Arpita Patra, and Ajith Suresh.
\newblock Flash: fast and robust framework for privacy-preserving machine
  learning.
\newblock \emph{Proceedings on Privacy Enhancing Technologies}, 2020\penalty0
  (2):\penalty0 459--480, 2020.

\bibitem[Canetti(2001)]{Canetti01}
Ran Canetti.
\newblock Universally composable security: {A} new paradigm for cryptographic
  protocols.
\newblock In \emph{42nd Annual Symposium on Foundations of Computer Science,
  {FOCS} 2001, 14-17 October 2001, Las Vegas, Nevada, {USA}}, pages 136--145.
  {IEEE} Computer Society, 2001.
\newblock \doi{10.1109/SFCS.2001.959888}.
\newblock URL \url{https://doi.org/10.1109/SFCS.2001.959888}.

\bibitem[Chaudhuri and Vinterbo(2013)]{chaudhuri2013stability}
Kamalika Chaudhuri and Staal~A Vinterbo.
\newblock A stability-based validation procedure for differentially private
  machine learning.
\newblock In \emph{Advances in Neural Information Processing Systems}, pages
  2652--2660, 2013.

\bibitem[Chen et~al.(2016)Chen, Machanavajjhala, Reiter, and
  Barrientos]{chen2016differentially}
Yan Chen, Ashwin Machanavajjhala, Jerome~P Reiter, and Andr{\'e}s~F Barrientos.
\newblock Differentially private regression diagnostics.
\newblock In \emph{ICDM}, pages 81--90, 2016.

\bibitem[Damg{\aa}rd et~al.(2019)Damg{\aa}rd, Escudero, Frederiksen, Keller,
  Scholl, and Volgushev]{damgaard2019new}
Ivan Damg{\aa}rd, Daniel Escudero, Tore Frederiksen, Marcel Keller, Peter
  Scholl, and Nikolaj Volgushev.
\newblock New primitives for actively-secure mpc over rings with applications
  to private machine learning.
\newblock In \emph{2019 IEEE Symposium on Security and Privacy (SP)}, pages
  1102--1120. IEEE, 2019.

\bibitem[Goldreich et~al.(1987)Goldreich, Micali, and
  Wigderson]{Goldreich:1987:PAM:28395.28420}
O.~Goldreich, S.~Micali, and A.~Wigderson.
\newblock How to play any mental game.
\newblock In \emph{Proceedings of the Nineteenth Annual ACM Symposium on Theory
  of Computing}, STOC '87, pages 218--229, New York, NY, USA, 1987. ACM.
\newblock ISBN 0-89791-221-7.
\newblock \doi{10.1145/28395.28420}.
\newblock URL \url{http://doi.acm.org/10.1145/28395.28420}.

\bibitem[Juvekar et~al.(2018)Juvekar, Vaikuntanathan, and
  Chandrakasan]{juvekar2018gazelle}
Chiraag Juvekar, Vinod Vaikuntanathan, and Anantha Chandrakasan.
\newblock $\{$GAZELLE$\}$: A low latency framework for secure neural network
  inference.
\newblock In \emph{27th $\{$USENIX$\}$ Security Symposium ($\{$USENIX$\}$
  Security 18)}, pages 1651--1669, 2018.

\bibitem[Li et~al.(2020)Li, Wu, Song, Lu, Li, and Zhao]{li2020deepfed}
Beibei Li, Yuhao Wu, Jiarui Song, Rongxing Lu, Tao Li, and Liang Zhao.
\newblock Deepfed: Federated deep learning for intrusion detection in
  industrial cyber--physical systems.
\newblock \emph{IEEE Transactions on Industrial Informatics}, 17\penalty0
  (8):\penalty0 5615--5624, 2020.

\bibitem[Lindell(2017)]{Lindell17}
Yehuda Lindell.
\newblock How to simulate it - {A} tutorial on the simulation proof technique.
\newblock In Yehuda Lindell, editor, \emph{Tutorials on the Foundations of
  Cryptography}, pages 277--346. Springer International Publishing, 2017.
\newblock \doi{10.1007/978-3-319-57048-8\_6}.
\newblock URL \url{https://doi.org/10.1007/978-3-319-57048-8\_6}.

\bibitem[Matthews and Harel(2013)]{matthews2013examination}
Gregory~J Matthews and Ofer Harel.
\newblock An examination of data confidentiality and disclosure issues related
  to publication of empirical roc curves.
\newblock \emph{Academic radiology}, 20\penalty0 (7):\penalty0 889--896, 2013.

\bibitem[Mohassel and Rindal(2018)]{mohassel2018aby3}
Payman Mohassel and Peter Rindal.
\newblock Aby3: A mixed protocol framework for machine learning.
\newblock In \emph{Proceedings of the 2018 ACM SIGSAC Conference on Computer
  and Communications Security}, pages 35--52, 2018.

\bibitem[Mohassel and Zhang(2017)]{mohassel2017secureml}
Payman Mohassel and Yupeng Zhang.
\newblock Secureml: A system for scalable privacy-preserving machine learning.
\newblock In \emph{2017 IEEE Symposium on Security and Privacy (SP)}, pages
  19--38. IEEE, 2017.

\bibitem[Noren et~al.(2016)Noren, Long, Norel, Rrhissorrakrai, Hess, Hu,
  Bisberg, Schultz, Engquist, Liu, et~al.]{noren2016crowdsourcing}
David~P Noren, Byron~L Long, Raquel Norel, Kahn Rrhissorrakrai, Kenneth Hess,
  Chenyue~Wendy Hu, Alex~J Bisberg, Andre Schultz, Erik Engquist, Li~Liu,
  et~al.
\newblock A crowdsourcing approach to developing and assessing prediction
  algorithms for aml prognosis.
\newblock \emph{PLoS computational biology}, 12\penalty0 (6):\penalty0
  e1004890, 2016.

\bibitem[Patra and Suresh(2020)]{PatraS20}
Arpita Patra and Ajith Suresh.
\newblock {BLAZE:} blazing fast privacy-preserving machine learning.
\newblock In \emph{27th Annual Network and Distributed System Security
  Symposium, {NDSS} 2020, San Diego, California, USA, February 23-26, 2020}.
  The Internet Society, 2020.

\bibitem[Sun et~al.(2022)Sun, Yang, Yao, Xie, Wu, and
  Wang]{sun2022differentially}
Jiankai Sun, Xin Yang, Yuanshun Yao, Junyuan Xie, Di~Wu, and Chong Wang.
\newblock Differentially private auc computation in vertical federated
  learning.
\newblock \emph{arXiv preprint arXiv:2205.12412}, 2022.

\bibitem[{\"{U}}nal et~al.(2022){\"{U}}nal, Akg{\"{u}}n, and
  Pfeifer]{unal2022cecilia}
Ali~Burak {\"{U}}nal, Mete Akg{\"{u}}n, and Nico Pfeifer.
\newblock {CECILIA:} comprehensive secure machine learning framework.
\newblock \emph{CoRR}, abs/2202.03023, 2022.
\newblock URL \url{https://arxiv.org/abs/2202.03023}.

\bibitem[Wagh et~al.(2018)Wagh, Gupta, and Chandran]{wagh2018securenn}
Sameer Wagh, Divya Gupta, and Nishanth Chandran.
\newblock Securenn: Efficient and private neural network training.
\newblock \emph{IACR Cryptol. ePrint Arch.}, 2018:\penalty0 442, 2018.

\bibitem[Whitehill(2019)]{whitehill2019does}
Jacob Whitehill.
\newblock How does knowledge of the auc constrain the set of possible
  ground-truth labelings?
\newblock In \emph{Proceedings of the AAAI Conference on Artificial
  Intelligence}, volume~33, pages 5425--5432, 2019.

\bibitem[Yao(1986)]{Yao:1986:GES:1382439.1382944}
Andrew Chi-Chih Yao.
\newblock How to generate and exchange secrets.
\newblock In \emph{Proceedings of the 27th Annual Symposium on Foundations of
  Computer Science}, SFCS '86, pages 162--167, Washington, DC, USA, 1986. IEEE
  Computer Society.
\newblock ISBN 0-8186-0740-8.
\newblock \doi{10.1109/SFCS.1986.25}.
\newblock URL \url{https://doi.org/10.1109/SFCS.1986.25}.

\end{thebibliography}

\section*{Appendix}
\begin{landscape}
\setlength{\tabcolsep}{0.19cm}
\begin{table*}[ht]
    \small
    \centering
    \renewcommand{\arraystretch}{1.2}
    \begin{tabularx}{1.4\textwidth}{S{5}{1.07} S{10}{0.37} S{20}{2.45} S{20}{2.45} S{20}{2.45} S{20}{2.45} F}
    \toprule
     & & \multicolumn{4}{>{\columncolor{gray!20!white}}c}{Communication Costs (MB)} & \\
    \hhline{*1{>{\arrayrulecolor{gray!5!white}}-}*1{>{\arrayrulecolor{gray!10!white}}-}*4{>{\arrayrulecolor{black}}-}*1{>{\arrayrulecolor{gray!30!white}}-}}{\arrayrulecolor{black}}
    D $\times$ \nsamples & $\delta$ & $P_1$ & $P_2$ & Helper & Total &  Time (sec)\\ 
    \midrule
     $3 \times 64 $ & $1$ & $ 1.36 / 1.41 $ & $ 1.02 / 1.05 $ & $ 0.86 / 0.88 $ & $ 3.24 / 3.34 $ & $ 16.77 / 24.72 $ \\
    $3 \times 128 $ & $1$ & $ 5.49 / 5.63 $ & $ 3.64 / 3.71 $ & $ 3.48 / 3.55 $ & $ 12.61 / 12.89 $ & $ 33.52 / 48.96 $ \\
    $3 \times 256 $ & $1$ & $ 22.06 / 22.31 $ & $ 14.15 / 14.28 $ & $ 13.97 / 14.11 $ & $ 50.18 / 50.7 $ & $ 66.1 / 97.65 $ \\
    $3 \times 512 $ & $1$ & $ 87.8 / 87.32 $ & $ 55.85 / 55.49 $ & $ 55.66 / 55.31 $ & $ 199.31 / 198.12 $ & $ 131.69 / 193.25 $ \\
    $3 \times 1024 $ & $1$ & $ 349.64 / 353.11 $ & $ 221.96 / 224.07 $ & $ 221.75 / 223.86 $ & $ 793.35 / 801.04 $ & $ 233.27 / 356.31 $ \\
    
    \midrule
    
    $ 2 \times 1000$ & $1$ & $ 119.28 / 119.66 $ & $ 75.81 / 75.99 $ & $ 75.63 / 75.81 $ & $ 270.72 / 271.46 $ & $ 91.44 / 173.79 $ \\
    $ 4 \times 1000$ & $1$ & $ 714.87 / 715.42 $ & $ 453.69 / 454.06 $ & $ 453.45 / 453.82 $ & $ 1622.01 / 1623.3 $ & $ 358.37 / 523.06 $ \\
    $ 8 \times 1000$ & $1$ & $ 3333.01 / 3332.37 $ & $ 2115.66 / 2116.39 $ & $  2115.26 / 2115.99 $ & $ 7563.93 / 7564.75 $ & $ 1074.87 / 1404.6 $ \\
    
    \midrule
    $8 \times 1000$ & $ 3 $ & $ 1668.73 / 1671.24 $ & $ 1059.34 / 1060.06 $ & $ 1059.02 / 1059.73 $ & $ 3787.09 / 3791.03 $ & $ 818.98 / 1149.16 $ \\
    $8 \times 1000$ & $ 5 $ & $ 1114.3 / 1115.72 $ & $ 707.22 / 707.94 $ & $ 706.92 / 707.64 $ & $ 2528.44 / 2531.3 $ & $ 732.85 / 1061.33 $ \\
    $8 \times 1000$ & $ 11 $ & $ 559.28 / 561.35 $ & $ 355.1 / 355.83 $ & $ 354.83 / 355.56 $ & $ 1269.21 / 1272.74 $ & $ 646.66 / 1000.99 $ \\
    $8 \times 1000$ & $ 25 $ & $ 261.08 / 262.62 $ & $ 165.52 / 166.24 $ & $ 165.26 / 165.98 $ & $ 591.86 / 594.84 $ & $ 604.81 / 974.64 $ \\
    $8 \times 1000$ & $ 51 $ & $ 132.84 / 134.64 $ & $ 84.3 / 85.03 $ & $ 84.04 / 84.77 $ & $ 301.18 / 304.44 $ & $ 591.72 / 965.86 $ \\
    $8 \times 1000$ & $ 101 $ & $ 70.44 / 71.98 $ & $ 44.54 / 45.26 $ & $ 44.29 / 45.01 $ & $ 159.27 / 162.25 $ & $ 599.53 / 955.07 $ \\
    
    \midrule
    $8 \times UNB$ & $1$ & $ 120.51 / 120.56 $ & $ 76.6 / 76.57 $ & $ 76.38 / 76.35 $ & $ 273.49 / 273.48 $ & $ 297.59 / 379.71 $ \\
     
    \bottomrule
    \end{tabularx}
    \caption{Summary of the results of the experiments with \fw{} to compute AUROC with and without tie on synthetic data. The left side of ``/'' represents \textit{without-tie} results and the right side of it represents \textit{with-tie} results. $D$ represents the number of data sources and $\nsamples$ represents the number of samples in one data source. $UNB$ represents the unbalanced sample distribution, which is $\{12,18,32,58,107,258,507,1008\}$.}
    \label{tab:syn_auroc_result_summary}
\end{table*}
\end{landscape}

\end{document}